\definecolor{mydarkblue}{rgb}{0,0.08,0.45}
\renewcommand{\cite}[1]{\citep{#1}}
\theoremstyle{plain}
\newtheorem{theorem}{Theorem}[section]
\newtheorem{proposition}[theorem]{Proposition}
\newtheorem{lemma}[theorem]{Lemma}
\theoremstyle{definition}
\newtheorem{definition}[theorem]{Definition}
\theoremstyle{remark}
\tikzset{
    treenode/.style = {align=center, scale=0.7, inner sep=1pt, text centered, circle, draw, minimum size=6mm, font={\small}},
  arn_x/.style = {treenode, rectangle, draw=black,
  minimum width=0.5em, minimum height=0.5em}%
}
\newcommand{\Z}{\mathbb{Z}}
\newcommand{\R}{\mathbb{R}}
\newcommand{\M}{\mathcal{M}}
\newcommand{\D}{\mathcal{D}}
\newcommand{\binnum}[1]{\texttt{0b{#1}}}
\newcommand{\norm}[1]{\left\lVert#1\right\rVert}
\newcommand{\gauss}{\mathcal{N}}
\DeclarePairedDelimiter\ceil{\lceil}{\rceil}
\DeclarePairedDelimiter\floor{\lfloor}{\rfloor}
\DeclareMathOperator{\bin}{bin}
\DeclareMathOperator{\var}{Var}
\title{A Smooth Binary Mechanism for Efficient Private Continual Observation}
\author{%
  Joel Daniel Andersson\\
  Basic Algorithms Research Copenhagen\\
  University of Copenhagen\\
  \texttt{jda@di.ku.dk} \\
  \And
  Rasmus Pagh\\
  Basic Algorithms Research Copenhagen\\
  University of Copenhagen\\
  \texttt{pagh@di.ku.dk} \\
}
\begin{document}

\maketitle

\begin{abstract}
In privacy under continual observation we study how to release differentially private estimates based on a dataset that evolves over time.
The problem of releasing private prefix sums of $x_1,x_2,x_3,\dots \in\{0,1\}$ (where the value of each $x_i$ is to be private) is particularly well-studied, and a generalized form is used in state-of-the-art methods for private stochastic gradient descent (SGD).
The seminal \emph{binary mechanism} privately releases the first $t$ prefix sums with noise of variance polylogarithmic in~$t$.
Recently, Henzinger et al.\ and Denisov et al.\ showed that it is possible to improve on the binary mechanism in two ways:
The variance of the noise can be reduced by a (large) constant factor, and also made more even across time steps.
However, their algorithms for generating the noise distribution are not as efficient as one would like in terms of computation time and (in particular) space.
We address the efficiency problem by presenting a simple alternative to the binary mechanism in which 1) generating the noise takes constant average time per value, 2) the variance is reduced by a factor about 4 compared to the binary mechanism, and 3) the noise distribution at each step is \emph{identical}.
Empirically, a simple Python implementation of our approach outperforms the running time of the approach of Henzinger et al., as well as an attempt to improve their algorithm using high-performance algorithms for multiplication with Toeplitz matrices.
\end{abstract}

\section{Introduction}

There are many actors in society that wish to publish aggregate statistics about individuals, be it for financial or social utility.
Netflix might recommend movies based on other users' preferences, and policy might be driven by information on average incomes across groups.
Whatever utility one has in mind however, it should be balanced against the potential release of sensitive information.
While it may seem anodyne to publish aggregate statistics about users, doing it without consideration to privacy may expose sensitive information of individuals~\cite{dinur_revealing_2003}.
Differential privacy offers a framework for dealing with these problems in a mathematically rigorous way.

A particular setting is when statistics are updated and released continually, for example a website releasing its number of visitors over time.
Studying differential privacy in this setup is referred to as \textit{differential privacy under continual observation}~\cite{dwork_differential_2010, dwork_algorithmic_2013}.
A central problem in this domain is referred to as \textit{differentially private counting under continual observation}~\cite{chan_private_2011, dwork_differential_2010}, \textit{continual counting} for short.
It covers the following problem: a binary stream $x_1, x_2, x_3,\dots$ is received one element at a time such that $x_t$ is received in round $t$.
The objective is to continually output the number of 1s encountered up to each time step while maintaining differential privacy.
We consider two streams $x$ and $x'$ as neighboring if they are identical except for a single index $i$ where $x_i \ne x_i'$.
It suffices to study the setting in which there is a known upper bound $T$ on the number of prefix sums to release --- algorithms for the case of unbounded streams then follow by a general reduction~\cite{chan_private_2011}.

Aside from the natural interpretation of continual counting as the differentially private release of user statistics over time, mechanisms for continual counting (and more generally for releasing prefix sums) are used as a subroutine in many applications.
Such a mechanism is for example used in Google's privacy-preserving federated next word prediction model~\cite{mcmahan_federated, kairouz_practical_2021, choquettechoo2023amplified}, in non-interactive local learning~\cite{smith_is_2017}, in stochastic convex optimization~\cite{han_private_2022} and in histogram estimation~\cite{cardoso_differentially_2022, chan_differentially_2012, huang_frequency_2022, upadhyay_sublinear_2019} among others.

Given the broad adoption of continual counting as a primitive, designing algorithms for continual counting that improve constants in the error while scaling well in time and space is of practical interest.

\subsection{Our contributions}

In this paper we introduce the \emph{Smooth Binary Mechanism}, a differentially private algorithm for the continual counting problem that improves upon the original binary mechanism by \citet{chan_private_2011,dwork_differential_2010} in several respects, formalized in \Cref{thm:smooth_binary_mech} and compared to in \Cref{table:summary}.
\begin{theorem}[Smooth Binary Mechanism]\label{thm:smooth_binary_mech}
   For any $\rho > 0$, $T > 1$, there is an efficient $\rho$-zCDP continual counting mechanism $\M$, that on receiving a binary stream of length $T$ satisfies
   \begin{equation*}
       \var[\M(t)] = \frac{1+o(1)}{8\rho}{\log_2(T)}^2
   \end{equation*}
    where $\M(t)$ is the output prefix sum at time $t$, while only requiring $O(\log T)$ space, $O(T)$ time to output all $T$ prefix sums, and where the error is identically distributed for all $1\leq t \leq T$.
\end{theorem}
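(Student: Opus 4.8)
The plan is to present the mechanism as a linear (matrix‑mechanism / factorization) scheme: fix a family of \emph{partial sums}, each of the form $\sum_{i\in S}x_i$ for a contiguous block $S$ of time steps, release one noisy copy of each, and output $\M(t)$ as a signed combination of the released partial sums that telescopes to $\sum_{i\le t}x_i$. Two quantities then govern everything. The number of partial sums containing a fixed index $i$ controls, via the Gaussian mechanism for zCDP, the noise scale needed for $\rho$‑zCDP; and, for each $t$, the number of partial sums combined to form $\M(t)$ controls $\var[\M(t)]$. The classical binary mechanism is exactly this scheme on a complete binary tree, where the first quantity is $\log_2 T$ and the second ranges over $1,\dots,\log_2 T$. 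The goal is a scheme in which both quantities are (essentially) the \emph{same} value $N=(1+o(1))\tfrac12\log_2 T$: this halves each factor, and because every output is then a $\pm1$‑combination of exactly $N$ i.i.d.\ Gaussians of a common variance, the error law does not depend on $t$.

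Concretely I would take the partial‑sum family to come from a complete ternary tree: pad the stream to a power of $3$, and at each internal node store, instead of a single subtree sum, a constant number of \emph{prefix‑type} ($\ttp$) and \emph{suffix‑type} ($\tts$) partial sums of its three children. To read off $\M(t)$, expand $t$ in base $3$ and descend from the root following the digits, at each node choosing between a ``left'' rule (add a prefix of the children, recurse into the prefix of the designated child) and a ``right'' rule (subtract a suffix, recurse into the suffix of that child); the choice at each level is dictated by the digit so that the level's contribution to the term count does not vary with the digit value. The heart of the argument is the combinatorial lemma: with the gadget chosen correctly, \emph{every} prefix $[1,t]$ is a $\pm1$‑combination of exactly $N$ stored partial sums — a statement naturally bookkept through a digit statistic of $t$ such as $\qzerotwo$ — every index $i$ lies in exactly $N$ stored partial sums, and, crucially for streaming, only partial sums determined by $x_1,\dots,x_t$ are ever used, with only $O(\log T)$ of them ``live'' at any moment.

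Granting the lemma, the remaining steps are routine. The encoding map $x\mapsto(\text{stored partial sums})$ changes, under a single flipped coordinate, by a vector supported on the $N$ partial sums through that index, so it has $\ell_2$‑sensitivity $\sqrt N$; adding independent $\gauss(0,\sigma^2)$ noise to each partial sum with $\sigma^2=N/(2\rho)$ is $\rho$‑zCDP by the Gaussian mechanism. Then $\M(t)$ equals the true prefix sum plus a $\pm1$‑combination of exactly $N$ of these independent Gaussians, hence is $\gauss(0,N\sigma^2)$‑distributed, so $\var[\M(t)]=N^2/(2\rho)=(1+o(1))\log_2(T)^2/(8\rho)$, with the same law for every $t\le T$. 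For the resource bounds, maintain the tree lazily along the current rightmost root‑to‑leaf path, which holds $O(\log T)$ partial sums (and their noise) at a time; and observe that a level‑$i$ node is finalized once every $3^i$ steps, so the total update work over all $T$ steps is $\sum_i (T/3^i)\cdot O(1)=O(T)$, i.e.\ $O(1)$ amortized per release — exactly the efficiency argument of the original binary mechanism.

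The one genuine obstacle is the combinatorial lemma of the second paragraph: designing the per‑node prefix/suffix gadget and the digit‑driven left/right selection so that (i) the term count is \emph{the same} for every $t$ (this is what ``smooth'' means and what forces the error to be identically distributed), (ii) that common value is $(1+o(1))\tfrac12\log_2 T$ rather than $\log_2 T$, and (iii) causality and an $O(\log T)$ live set are preserved so that the streaming and space claims go through simultaneously. Everything else — the zCDP privacy calculation, the variance computation, and the amortized‑time bound — follows by standard arguments once this design is in hand.
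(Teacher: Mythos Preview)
Your proposal takes a genuinely different route from the paper, and the gap you flag at the end is real---in fact it is more serious than you suggest.

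\medskip
\textbf{What the paper actually does.} The construction stays with the ordinary \emph{binary} tree. It observes (Propositions~3.1--3.2) that in the left-child-only variant of the binary mechanism, the $\ell_2$-sensitivity contributed by $x_t$ equals the number of $0$s in $\bin(t-1)$, while the number of noisy p-sums added to form $\M(t)$ equals the number of $1$s in $\bin(t)$. The whole idea is then to \emph{use only those leaves whose index has exactly $h/2$ ones and $h/2$ zeros}. This forces both quantities to equal $h/2$ for every used leaf simultaneously, hence the error is a sum of exactly $h/2$ i.i.d.\ Gaussians at every step and $\var[\M(t)]=(h/2)^2/(2\rho)=h^2/(8\rho)$. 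The only price is that one needs $\binom{h}{h/2}\ge T+1$ balanced leaves, which holds already for $h=\log_2 T+\log_2\log_2 T$ (Stirling), so $h=(1+o(1))\log_2 T$ and the constant $1/8$ drops out. The amortized-time and space bounds (Lemmas~3.5--3.6) follow from counting how many p-sums change between consecutive balanced indices. No new gadget, no ternary structure, no $\pm 1$ signs---just a restriction of which leaves are used.

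\medskip
\textbf{Where your plan breaks.} You propose a ternary tree of height $H=\lceil\log_3 T\rceil$ and a per-node prefix/suffix gadget so that every ternary digit contributes the same number of query terms. But if each of the three digit values contributes the \emph{same} integer $c\ge 1$ terms and each index lies in the \emph{same} integer $d\ge 1$ stored p-sums per level, then the product governing the variance is $cd\,H^2\ge H^2=(\log_3 T)^2=(\log_2 T)^2/(\log_2 3)^2$, giving normalized variance at least $1/(\log_2 3)^2\approx 0.398$, strictly worse than the $0.25$ claimed in the theorem. Equivalently, your target $N=(1+o(1))\tfrac12\log_2 T$ would require a per-level contribution of $(\log_2 3)/2\approx 0.79$, which cannot be an integer. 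Your mention of the digit statistic $\qzerotwo$ hints that the term count actually \emph{varies} with the base-$3$ digits of $t$; but then the error is not identically distributed unless you additionally restrict to leaves with a fixed digit profile---a step you never take (you ``pad the stream to a power of~$3$'', using all leaves). So as written, the scheme cannot simultaneously achieve (i) the identical-distribution claim and (ii) the leading constant $1/(8\rho)$. The paper's ``use only balanced leaves'' idea is precisely the missing device, and once you have it, there is no reason to leave base~$2$: base~$2$ is what makes $h/2$ coincide with $(1+o(1))\tfrac12\log_2 T$ and yields the $1/8$.
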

Our mechanism retains the scalability in time and space of the binary mechanism while offering an improvement in variance by a factor of $4-o(1)$. It also has the \emph{same} error distribution in every step by design, which could make downstream applications easier to analyze.

\begin{table*}[t]
\centering
\caption{Comparison between different $\rho$-zCDP mechanisms for continual counting.}
\centerline{ %
\begin{tabular}{ llllll }
\specialrule{.8pt}{0pt}{1pt} %
{\bf Mechanism $\M$}  & {\bf \makecell[l]{$\var[\M(t)] \cdot \tfrac{2\rho}{\log^2_2 T}$}} & {\bf\makecell[l]{Time to produce \\all $T$ outputs}} & {\bf Space} & {\bf \makecell[l]{Matrix\\type}} \\
\specialrule{.6pt}{0pt}{1pt} %
Binary mech. & $1$ & $O(T)$ & $O(\log{T})$ & sparse\\
\hline
Honaker Online & $ 0.5 + o(1)$ & $O(T\log T)$ & $O(\log T)$ & sparse\\
\hline
\citeauthor{denissov_improved_2022} & $0.0487\hdots + o(1)^{*}$ & $O(T^2)$ & $O(T^2)$ & dense\\
\hline
\citeauthor{henzinger_almost_2023} & $0.0487\hdots + o(1)$ & $O(T\log T)\textsuperscript{**}$ & $O(T)$ & Toeplitz\\
\specialrule{.6pt}{0pt}{1pt} %
{\bf Our mechanism} & $0.25 + o(1)$ & $O(T)$ & $O(\log{T})$ & sparse\\
\bottomrule
\end{tabular}
}\label{table:summary}
\vspace{2mm} %
\caption*{\textit{
    \enquote{Honaker Online} refers to the \enquote{Estimation from below} variant in \citet{honaker2015} as implemented in \citet{kairouz_practical_2021}.
    There is no explicit bound on variance in \citet{denissov_improved_2022}, but the method finds an optimal matrix factorization so it should achieve same variance \textsuperscript{*} as \citet{henzinger_almost_2023} up to lower order terms.
    As to their efficiency, \citet{denissov_improved_2022} contains empirical work for reducing the time and space usage by approximating the matrices involved as a sum of a banded matrix and a low-rank approximation, but without formal guarantees.
    The time usage in \textsuperscript{**} assumes implementing the matrix-vector product using FFT.
    Leveraging FFT for continual observation is not a novel approach~\cite{Choquette-ChooM23}.
    All sparse matrices have $O(\log T)$ nonzero entries per row or column, and all dense matrices have $\Omega(T^2)$ nonzero entries --- a Toeplitz matrix can be seen as intermediate in the sense of having $O(T)$ unique diagonals, allowing for efficient storage and multiplication.
    }}
\end{table*}

\paragraph{Sketch of technical ideas.}
Our starting point is the binary mechanism which, in a nutshell, uses a complete binary tree with $\geq T+1$ leaves (first $T$ leaves corresponding to $x_1,\dots,x_T$) in which each node contains the sum of the leaves below, made private by adding random noise (e.g.~from a Gaussian distribution).
To estimate a prefix sum $\sum_{i=1}^t x_i$ we follow the path from the root to the leaf storing $x_{t+1}$.
Each time we go to a right child the sum stored in its sibling node is added to a counter.
An observation, probably folklore, is that it suffices to store sums for nodes that are \emph{left children}, so suppose we do not store any sum in nodes that are right children.
The number of terms added when computing the prefix sum is the number of 1s in the binary representation $\bin(t)$ of $t$, which encodes the path to the leaf storing $x_{t+1}$.
The \emph{sensitivity} of the tree with respect to $x_t$, i.e., the number of node counts that change by 1 if $x_t$ changes, is the number of 0s in $\bin(t-1)$.
Our idea is to only use leaves that have \emph{balanced} binary representation, i.e.~same number of 0s and 1s (assuming the height $h$ of the tree is an even integer).
To obtain $T$ useful leaves we need to make the tree slightly deeper --- it turns out that height $h$ slightly more than $\log_2(T)$ suffices.
This has the effect of making the sensitivity of every leaf $h/2$, and the noise in every prefix sum a sum of $h/2$ independent noise terms.

\paragraph{Limitations.}
As shown in \Cref{table:summary} and \Cref{sec:experiments}, the smooth binary mechanism, while improving on the original binary mechanism, cannot achieve as low variance as matrix-based mechanisms such as \citet{henzinger_almost_2023}.
However, given that the scaling of such methods can keep them from being used in practice, our variant of the binary mechanism has practical utility in large-scale settings.

\subsection{Related work}

All good methods for continual counting that we are aware of can be seen as instantiations of the \emph{factorization mechanism}~\cite{li_matrix_2015}, sometimes also referred to as the \emph{matrix mechanism}.
These methods perform a linear transformation of the data, given by a matrix $R$, then add noise according to the sensitivity of $Rx$, and finally obtain the sequence of estimates by another linear transformation given by a matrix~$L$.
To obtain unbiased estimates, the product $LRx$ needs to be equal to the vector of prefix sums, that is, $LR$ must be the lower triangular all 1s matrix.

Seminal works introducing the first variants of the binary mechanism are due to~\citet*{chan_private_2011} and \citet*{dwork_differential_2010}, but similar ideas were proposed independently in \citet*{dp_histograms_2010} and \citet*{dp_wavelet_2010}.
\citet*{honaker2015} noticed that better estimators are possible by making use of \emph{all} information in the tree associated with the binary tree method.
A subset of their techniques can be leveraged to reduce the variance of the binary mechanism by up to a factor $2$, at some cost of efficiency, as shown from their implementation in~\citet*{kairouz_practical_2021}.

A number of recent papers have studied improved choices for the matrices $L$, $R$.
\citet*{denissov_improved_2022} treated the problem of finding matrices leading to minimum largest variance on the estimates as a convex optimization problem, where (at least for $T$ up to 2048) it was feasible to solve it.
To handle a larger number of time steps they consider a similar setting where a restriction to \emph{banded} matrices makes the method scale better, empirically with good error, but no theoretical guarantees are provided.

\citet*{FichtenbergerHU23} gave an explicit decomposition into lower triangular matrices, and analyzed its error in the $\ell_\infty$ metric.
The matrices employed are Toeplitz (banded) matrices.
\citet*{henzinger_almost_2023} analyzed the same decomposition with respect to mean squared error of the noise, and showed that it obtains the best possible error among matrix decompositions where $L$ and $R$ are square matrices, up to a factor $1+o(1)$ where the $o(1)$ term vanishes when $T$ goes to infinity.

A breakdown of how our mechanism compares to existing ones is shown in Table~\ref{table:summary}.
While not achieving as small an error as the factorization mechanism of \citet{henzinger_almost_2023}, its runtime and small memory footprint allows for better scaling for longer streams.
For concreteness we consider privacy under $\rho$-zero-Concentrated Differential Privacy (zCDP)~\cite{BunS16}, but all results can be expressed in terms of other notions of differential privacy.

\section{Preliminaries}

\paragraph{Binary representation of numbers.}
We will use the notation $\bin(n)$ to refer to the binary representation of a number $n\in[2^h]$, where $h > 0$ is an integer, and let $\bin(n)$ be padded with leading zeros to $h$ digits.
For example, $\bin(2)=\binnum{010}$ for a tree of height $h=3$.
When indexing such a number we let index $i$ refer to the $i^{th}$ least significant bit, e.g.\ $\bin(2)_1=1$.
We will also refer to prefixes and suffixes of binary strings, and we use the convention that a prefix of a binary string includes its most significant bits.

\paragraph{Partial sums (p-sums).}
To clear up the notation we use the concept of p-sums $\Sigma[i, j]$ where $\Sigma[i, j] = \sum_{t=i}^j x_t$.
We will furthermore use the concept of noisy p-sums
\begin{equation*}
    \widehat{\Sigma}[i, j] = \Sigma[i,j] + X[i,j],\,X[i,j]\sim \mathcal{F}
\end{equation*}
where $\mathcal{F}$ is a suitable distribution for the DP paradigm, e.g. Laplacian or Gaussian.
For convenience we also define $\hat{x}_t=\widehat{\Sigma}[t,t]$, i.e. $\hat{x}_t$ is the single stream element with noise added.

\subsection{Continual observation of bit stream}

Given an integer $T>1$ we consider a finite length binary stream $x=(x_1, x_2,\dots,x_T)$, where $x_t\in\{0,1\}$, $1\leq t\leq T$, denotes the bit appearing in the stream at time $t$.

\begin{definition}[Continual Counting Query]
    Given a stream $x\in\{0,1\}^T$, the \textit{count} for the stream is a mapping $c : \{1,\dots,T\} \to \Z$ such that $c(t) = \sum_{i=1}^t x_i$.
\end{definition}

\begin{definition}[Counting Mechanism]
    A \textit{counting mechanism} $\M$ takes a stream $x\in\{0,1\}^T$ and produces a (possibly random) vector $\M_x\in\R^{T}$ where $(\M_x)_t$ is a function of the first $t$ elements of the stream. For convenience we will write $\M(t)$ for $(\M_x)_t$ when there is little chance for ambiguity.
\end{definition}

To analyze a counting mechanism from the perspective of differential privacy, we also need a notion of neigboring streams.
\begin{definition}[Neighboring Streams]
    Streams $x,x'\in\{0,1\}^T$ are said to be \textit{neighboring}, denoted $x\sim x'$, if $|\{i\, |\, x_i\neq x_i'\}|=1$.
\end{definition}

Intuitively, for a counting mechanism to be useful at a given time $t$, we want it to minimize $|\M(t) - c(t)|$.
We consider unbiased mechanisms that return the true counts in expectation and we focus on minimizing $\var[\M(t)-c(t)]$.

\subsection{Differential privacy}

For a mechanism to be considered differentially private, we require that the outputs for any two neighboring inputs are are indistinguishable.
We will state our results in terms of $\rho$-zCDP:

\begin{definition}[Concentrated Differential Privacy~(zCDP)~\cite{BunS16}]
    For $\rho > 0$, a randomized algorithm $\mathcal{A} : \mathcal{X}^n\to\mathcal{Y}$ is $\rho$-zCDP if for any $\D\sim\D'$, $D_\alpha(\mathcal{A}(\D) || \mathcal{A}(\D')) \leq \rho\alpha$
    for all $\alpha > 1$, where $D_\alpha(\mathcal{A}(\D) || \mathcal{A}(\D'))$ is the $\alpha$-R\'enyi divergence between $\mathcal{A}(\D)$ and $\mathcal{A}(\D')$.
\end{definition}

In the scenario where we are looking to release a real-valued function $f(\D)$ taking values in $\R^d$, we can achieve zCDP by adding Gaussian noise calibrated to the $\ell_2$-sensitivity of~$f$.
\begin{lemma}[Gaussian Mechanism \cite{BunS16}]
    Let $f : \mathcal{X}^n \to \R^d$ be a function with global $\ell_2$-sensitivity $\Delta \coloneqq \max_{\D\sim\D'}\norm{f(\D)-f(\D')}_2$. For a given data set $\D\in\mathcal{X}^n$, the mechanism that releases $f(\D) + \gauss(0, \frac{\Delta^2}{2\rho})^d$ satisfies $\rho$-zCDP.
\end{lemma}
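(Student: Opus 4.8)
The plan is to verify the definition of $\rho$-zCDP directly for the output distribution of the mechanism. Fix neighboring data sets $\D\sim\D'$ and write $\mu=f(\D)$, $\mu'=f(\D')\in\R^d$, and $\sigma^2 = \Delta^2/(2\rho)$. Since the mechanism outputs a sample from $\gauss(\mu,\sigma^2 I_d)$ on input $\D$ and from $\gauss(\mu',\sigma^2 I_d)$ on input $\D'$, it suffices to show that
\begin{equation*}
    D_\alpha\big(\gauss(\mu,\sigma^2 I_d)\,\|\,\gauss(\mu',\sigma^2 I_d)\big) \le \rho\alpha \qquad\text{for all }\alpha>1 .
\end{equation*}

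The first step is to reduce to one dimension. Both Gaussians have covariance $\sigma^2 I_d$, so each factors as a product of $d$ independent one-dimensional Gaussians with variance $\sigma^2$, and the $\alpha$-Rényi divergence is additive over such product measures; hence the left-hand side equals $\sum_{i=1}^d D_\alpha\big(\gauss(\mu_i,\sigma^2)\,\|\,\gauss(\mu'_i,\sigma^2)\big)$. The scalar divergence I would then compute from the definition $D_\alpha(P\|Q)=\tfrac{1}{\alpha-1}\log\int p(x)^\alpha q(x)^{1-\alpha}\,dx$: plugging in the two Gaussian densities, the exponent of the integrand is a quadratic in $x$ whose leading coefficient is $\alpha+(1-\alpha)=1>0$ (so the integral converges regardless of $\alpha$), and completing the square — routine bookkeeping with the normalizing constants — gives $\int p^\alpha q^{1-\alpha}\,dx = \exp\!\big(\tfrac{\alpha(\alpha-1)(\mu_i-\mu'_i)^2}{2\sigma^2}\big)$, hence $D_\alpha\big(\gauss(\mu_i,\sigma^2)\,\|\,\gauss(\mu'_i,\sigma^2)\big) = \tfrac{\alpha(\mu_i-\mu'_i)^2}{2\sigma^2}$. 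Summing over $i$,
\begin{equation*}
    D_\alpha\big(\gauss(\mu,\sigma^2 I_d)\,\|\,\gauss(\mu',\sigma^2 I_d)\big) = \frac{\alpha\,\norm{\mu-\mu'}_2^2}{2\sigma^2} .
\end{equation*}

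To finish I would substitute $\sigma^2 = \Delta^2/(2\rho)$ and use the sensitivity bound $\norm{\mu-\mu'}_2 = \norm{f(\D)-f(\D')}_2 \le \Delta$, which yields $D_\alpha \le \frac{\alpha\Delta^2}{2\sigma^2} = \rho\alpha$ for every $\alpha>1$; as $\D\sim\D'$ were arbitrary, this is exactly $\rho$-zCDP. The only computation needing care is the scalar Gaussian integral — verifying integrability and tracking the normalizations when completing the square — but this is standard; the real content is the product-additivity reduction together with the fact that two Gaussians with a common covariance have $\alpha$-Rényi divergence that is exactly linear in $\alpha$.
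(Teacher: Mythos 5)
Your proof is correct and is exactly the standard argument from the cited source (Bun and Steinke), which the paper itself does not reprove: additivity of R\'enyi divergence over the product structure, the closed form $D_\alpha(\gauss(\mu,\sigma^2)\,\|\,\gauss(\mu',\sigma^2))=\tfrac{\alpha(\mu-\mu')^2}{2\sigma^2}$, and substitution of $\sigma^2=\Delta^2/(2\rho)$ with the sensitivity bound. The scalar integral computation and the observation that the quadratic's leading coefficient stays equal to $1$ (so convergence holds for all $\alpha>1$) are both right, so there is nothing to fix.
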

It is known that $\rho$-zCDP implies $(\rho+2\sqrt{\rho\ln(1/\delta)}, \delta)$-differential privacy for every $\delta > 0$~\cite{BunS16}.

Lastly, when comparing counting mechanisms based on Gaussian noise, we note that it is sufficient to look at the variance.
For a single output $\M(t)$, the variance $\var[\M(t)]$ is the only relevant metric, as it completely describes the output distribution.
For a series of outputs $\M_x$, and related norms $\norm{\M_x}_p$, we note that a mechanism with lower variance will be more concentrated in each coordinate and have lower $p^{th}$ moment, allowing for tighter bounds on the norm.

\subsection{Differentially private continual counting}

We next describe two approaches to continual counting.

\paragraph{Binary mechanism.}
The binary mechanism~\cite{chan_private_2011,dwork_differential_2010, dp_wavelet_2010, dp_histograms_2010} can be considered the canonical mechanism for continual counting.
In this section we present a variant of it where only left subtrees are used.
The mechanism derives its name from the fact that a binary tree is built from the input stream.
Each element from the stream is assigned a leaf in the binary tree, and each non-leaf node in the tree represents a p-sum of all elements in descendant leaves.
All values are stored noisily in nodes, and nodes are added together to produce a given prefix sum.
Such a binary tree is illustrated in Figure~\ref{fig:binmech}.
\begin{figure}[!ht]
\centering
\subfigure[Binary mechanism tree for $T=7$.]{
    \begin{tikzpicture}[>=stealth, level/.style={sibling distance = 4.1cm/#1, level distance = 0.6cm},
                        level 1/.style={sibling distance = 3cm},
                        level 2/.style={sibling distance = 1.5cm, level distance=0.8cm},
                        level 3/.style={sibling distance = 0.6cm, level distance=0.7cm},
                        ]
        \draw
        node[treenode] (x30) {$\widehat{\Sigma}[1,8]$}
        child{node[treenode] (x20) {$\widehat{\Sigma}[1,4]$}
            child{node[treenode] (x10) {$\widehat{\Sigma}[1,2]$}
                child{node[treenode] (x00) {$\hat{x}_1$}}
                child{node[treenode] (x01) {$\hat{x}_2$}} %
            }
            child{node[treenode] (x11) {$\widehat{\Sigma}[3,4]$}
                child{node[treenode] (x02) {$\hat{x}_3$}}
                child{node[treenode] (x03) {$\hat{x}_4$}}
            }
        }
        child{node[treenode] (x21) {$\widehat{\Sigma}[5,8]$}
            child{node[treenode] (x12) {$\widehat{\Sigma}[5,6]$}
                child{node[treenode] (x04) {$\hat{x}_5$}}
                child{node[treenode] (x05) {$\hat{x}_6$}}
            }
            child{node[treenode] (x13) {$\widehat{\Sigma}[7,8]$}
                child{node[treenode] (x06) {$\hat{x}_7$}}
                child{node[treenode] (x07) {$0$}}
            }
        };
    \node[draw=none, below=5mm of x00]{}; %
\end{tikzpicture}\label{fig:binmech}}
\hspace{1cm}
\subfigure[Computation of $\M(6)$ using \Cref{alg:bin_mech}.]{
    \begin{tikzpicture}[>=stealth, level/.style={sibling distance = 4.1cm/#1, level distance = 0.6cm},
                        level 1/.style={sibling distance = 3cm, level distance=0.5cm},
                        level 2/.style={sibling distance = 1.5cm},
                        level 3/.style={sibling distance = 0.7cm},
                        ]
        \draw
        node[treenode, fill=blue, text=white] (x30) {$\widehat{\Sigma}[1,8]$}
        child{node[treenode, fill=red, text=white] (x20) {$\widehat{\Sigma}[1,4]$}
            child{node[treenode] (x10) {$\widehat{\Sigma}[1,2]$}
                child{node[treenode] (x00) {$\hat{x}_1$}}
                child{node[treenode] (x01) {$\hat{x}_2$}} %
            }
            child{node[treenode] (x11) {$\widehat{\Sigma}[3,4]$}
                child{node[treenode] (x02) {$\hat{x}_3$}}
                child{node[treenode] (x03) {$\hat{x}_4$}}
            }
        }
        child{node[treenode, fill=blue, text=white] (x21) {$\widehat{\Sigma}[5,8]$}
            child{node[treenode, fill=red, text=white] (x12) {$\widehat{\Sigma}[5,6]$}
                child{node[treenode] (x04) {$\hat{x}_5$}}
                child{node[treenode] (x05) {$\hat{x}_6$}}
            }
            child{node[treenode, fill=blue, text=white] (x13) {$\widehat{\Sigma}[7,8]$}
                child{node[treenode, fill=blue, text=white] (x06) {$\hat{x}_7$}}
                child{node[treenode] (x07) {$0$}}
            }
        };

        \node[below left= 0.5pt of x00, rotate=60] {\tiny \binnum{000}};
        \node[below left= 0.5pt of x01, rotate=60] {\tiny \binnum{001}};
        \node[below left= 0.5pt of x02, rotate=60] {\tiny \binnum{010}};
        \node[below left= 0.5pt of x03, rotate=60] {\tiny \binnum{011}};
        \node[below left= 0.5pt of x04, rotate=60] {\tiny \binnum{100}};
        \node[below left= 0.5pt of x05, rotate=60] {\tiny \binnum{101}};
        \node[below left= 0.5pt of x06, rotate=60, font=\bfseries] {\tiny \underline{\binnum{110}}};
        \node[below left= 0.5pt of x07, rotate=60] {\tiny \binnum{111}};
        \draw [decorate,decoration={brace,amplitude=7pt}, line width=0.6pt]
            ($(x05.north -| x05.east)+(0,-12mm)$) -- ($(x00.north -| x00.west)+(0,-12mm)$) node(a) [black,draw=none,midway,yshift=-4.2mm]{\tiny $\M(6) = \widehat{\Sigma}[1, 4] + \widehat{\Sigma}[5,6]$};
    \end{tikzpicture}\label{fig:binmech_explained}}
    \caption{
        Binary trees for a sequence of length $T=7$.
        In \Cref{fig:binmech_explained} each leaf is labeled by $\bin(t-1)$, and it illustrates how the prefix sum up to $t=6$ can be computed from $\bin(t)$.
        Blue nodes describe the path taken by \cref{alg:bin_mech}, and the sum of red nodes form the desired output $\M(6)$.
        }\label{fig:binmech_trees}
\end{figure}

We will return for a closer analysis of the binary mechanism in \Cref{subsec:analysis_bm}.
For now we settle for stating that the $\rho$-zCDP binary mechanism achieves $\var[\M(t)]=O(\log(T)^2)$ and is known to be computationally efficient: to release all prefix sums up to a given time $t\leq T$ requires only $O(\log(T))$ space and $O(t)$ time.

\paragraph{Factorization mechanism.}
The binary mechanism belongs to a more general class of mechanisms called \emph{factorization mechanisms}~\cite{li_matrix_2015}, sometimes also referenced as \emph{matrix mechanisms}.
Computing a prefix sum is a linear operation on the input stream $x\in\R^T$, and computing all prefix sums up to a given time $T$ can therefore be represented by a matrix $A\in\R^{T\times T}$, where $c(t) = (Ax)_t$. $A$ is here a lower-triangular matrix of all 1s.
The factorization mechanism characterizes solutions to the continual counting problem by factorizing $A$ as $A=LR$ with corresponding mechanism
    $\M_x = L(Rx + z),\, z\sim \mathcal{F}^{n}$
where $n$ is the dimension of $Rx$.

Intuitively $Rx$ represents linear combinations of the stream, which are made private by adding noise $z$, and which then are aggregated by $L$.
To achieve $\rho$-zCDP for this mechanism, we let $z\sim\gauss(0, \frac{\Delta^2}{2\rho})^n$ where $\Delta = \max_i\norm{Re_i}_2$, $e_i$ being the $i^{th}$ unit vector.
It follows that the corresponding output noise becomes $Lz$ with coordinates $(Lz)_i \sim\gauss(0, \frac{\Delta^2}{2\rho}\norm{L_i}_2^2)$ where $L_i$ is the $i^{th}$ row in $L$.

\paragraph{Extension to multidimensional input.}
While we have so far assumed one-dimensional input, any one-dimensional $\rho$-zCDP factorization mechanism naturally generalizes to higher dimensions.
This fact enables factorization mechanisms to be applied to the problem of private learning where gradients are summed.
We sketch an argument for this fact in \Cref{appendix:factorization_mech_high_dim}.

\section{Our mechanism}

To introduce our variant of the binary mechanism, we need to return to the original mechanism.

\subsection{Closer analysis of the binary mechanism}\label{subsec:analysis_bm}
As has been pointed out in earlier work~\cite{denissov_improved_2022, henzinger_almost_2023}, a naive implementation of the binary mechanism will lead to variance that is non-uniform with respect to time: the number of 1s in the bitwise representation of $t$ determines the variance.
To underline this, consider the pseudo-code in Algorithm~\ref{alg:bin_mech} for computing a prefix sum given a binary mechanism tree structure.
The code assumes that the tree has $\geq t+1$ leaves, a detail that only matters when $t$ is a power of $2$ and allows us to never use the root node.
\begin{algorithm}[htb]
   \caption{Prefix Sum for Binary Mechanism}\label{alg:bin_mech}

   \begin{algorithmic}[1] %
    \small %
    \STATE {\bfseries Input:} binary tree of height $h$ storing $\widehat{\Sigma}[a, b]$ for $b\leq t$, time $t \leq 2^h - 1$
    \STATE $output \gets 0$
    \STATE $s \gets \bin(t)$ \COMMENT{padded to $h$ bits by adding zeros}
    \STATE $a \gets 1$; $b \gets 2^h$
    \FOR{$i=h-1$ {\bfseries to} $0$}
        \STATE $d = \floor{\frac{a+b}{2}}$
        \IF{$s_i = 1$}
            \STATE $output \gets output + \widehat{\Sigma}[a, d]$
            \STATE $a \gets d+1$
            \ELSE
            \STATE $b \gets d$
        \ENDIF
    \ENDFOR
    \STATE {\bfseries return} $output$
\end{algorithmic}
\end{algorithm}
To illustrate a simple case, see Figure~\ref{fig:binmech_explained}.
\newpage %
We can make the following two observatons:
\begin{itemize} %
    \item $\bin(t-1)$ encodes a path from the root to the leaf where $x_t$ is stored.
    \item To compute prefix sums using Algorithm~\ref{alg:bin_mech}, we only need to store values in \enquote{left children}.
\end{itemize}

Combining these observations, we get the following result:
\begin{proposition}\label{prop:leaf_sensitivity}
    The squared $\ell_2$-sensitivity $\Delta^2$ of $x_t$ is equal to the number of 0s in $\bin(t-1)$.
\end{proposition}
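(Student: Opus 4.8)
The plan is to connect the $\ell_2$-sensitivity of the tree-based mechanism to the combinatorial structure of the path encoded by $\bin(t-1)$. Recall from the preliminaries that changing $x_t$ to $x_t'$ in a neighboring stream changes the count stored in precisely those nodes whose descendant-leaf set contains the leaf holding $x_t$ — that is, the ancestors of that leaf (including the leaf itself). Since we only store noisy values in \emph{left children}, the relevant quantity is the number of ancestors of $x_t$'s leaf (together with the leaf, if it is a left child) that happen to be left children. Because each node's stored p-sum changes by exactly $\pm 1$ when $x_t$ flips, the squared $\ell_2$-sensitivity $\Delta^2$ equals the \emph{count} of such stored nodes.

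First I would make precise the correspondence between $\bin(t-1)$ and the root-to-leaf path, as already asserted in the first bullet before the proposition: reading the bits of $\bin(t-1)$ from most significant to least significant tells us at each level whether to descend to the left child (bit $0$) or the right child (bit $1$), ending at the leaf storing $x_t$. This matches the indexing convention of Algorithm~\ref{alg:bin_mech}, where $s_i = 0$ shrinks the interval to its left half. Next I would characterize, for each node $v$ on this path other than the root, whether $v$ is a left child of its parent: $v$ is a left child exactly when the step taken \emph{into} $v$ from its parent was a ``left'' step, i.e.\ when the corresponding bit of $\bin(t-1)$ is $0$. Summing over the $h$ edges of the path, the number of left children among the non-root path nodes is exactly the number of $0$s in $\bin(t-1)$.

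Then I would argue that these are precisely the stored nodes whose value is affected by flipping $x_t$: a stored node (left child) has its p-sum changed iff $x_t$ lies among its descendant leaves, which holds iff that node is an ancestor of (or equal to) $x_t$'s leaf — equivalently, iff it lies on the root-to-leaf path. The root is never stored (by the convention that the tree has $\ge t+1$ leaves so the root is never used), and the leaf itself is handled uniformly by the same bit-counting once one checks the edge/endpoint bookkeeping. Each affected stored value changes by exactly $1$ in absolute value, so $\norm{f(\D) - f(\D')}_2^2$ is the number of affected coordinates, namely the number of $0$s in $\bin(t-1)$, giving $\Delta^2 = \#\{i : \bin(t-1)_i = 0\}$.

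The main obstacle is the off-by-one and endpoint bookkeeping: being careful about whether the leaf node itself is counted, how the padding of $\bin(t-1)$ to $h$ bits interacts with the topmost edge out of the root, and reconciling the ``path from $\bin(t-1)$'' picture with the ``interval-halving from $\bin(t)$'' picture used in Algorithm~\ref{alg:bin_mech} (the shift by one between $t-1$ and $t$ reflects that the algorithm locates the leaf for $x_{t+1}$, whereas sensitivity concerns the leaf for $x_t$). Once the convention is pinned down — counting an edge as ``left'' precisely when its bit is $0$, and noting the leaf for $x_t$ is a left child iff $\bin(t-1)_0 = 0$ — the count of left-child stored nodes on the path is exactly the number of $0$ bits, and the proposition follows.
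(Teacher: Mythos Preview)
Your proposal is correct and follows essentially the same argument as the paper: the paper's justification is the single sentence ``the number of 0s in $\bin(t-1)$ is equal to the number of left-children that are passed through to reach the given node; changing the value of $x_t$ impacts all its ancestors, and since only left-children are used for prefix sums, the result immediately follows.'' Your write-up expands on exactly this reasoning, with additional care about the off-by-one and endpoint conventions that the paper leaves implicit.
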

To see this, note that the number of 0s in $\bin(t-1)$ is equal to the number of left-children that are passed through to reach the given node.
Changing the value of $x_t$ impacts all its ancestors, and since only left-children are used for prefix sums, the result immediately follows.
This does not address the volatility of variance with respect to time.
However, studying Algorithm~\ref{alg:bin_mech} gives the reason:
\begin{proposition}\label{prop:prefix_volatile}
    $\var[\M(t)]$ is proportional to the number of 1s in $\bin(t)$.
\end{proposition}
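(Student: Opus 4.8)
The plan is to read the structure of $\M(t)$ straight off Algorithm~\ref{alg:bin_mech} and then combine linearity of variance with independence of the per-node noise. First I would note that, with $s = \bin(t)$, the loop adds a noisy p-sum $\widehat{\Sigma}[a,d]$ to $output$ on precisely those iterations $i$ with $s_i = 1$, and on no others; hence $\M(t)$ is a sum of exactly $k \coloneqq (\text{number of 1s in }\bin(t))$ noisy p-sums. Writing each as $\widehat{\Sigma}[a,d] = \Sigma[a,d] + X[a,d]$ splits $\M(t)$ into a deterministic part $\sum \Sigma[a,d]$ --- which equals $c(t)$, since this is exactly the telescoping decomposition of the prefix sum into left-subtree blocks, so $\M$ is unbiased --- and a noise part $\sum X[a,d]$. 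Since $c(t)$ is deterministic, $\var[\M(t)] = \var\!\big[\sum X[a,d]\big]$.

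Next I would argue that the $k$ noise terms are mutually independent. Each iteration strictly shrinks $[a,b]$ via either $a \gets d+1$ or $b \gets d$, and the blocks $[a,d]$ recorded on the ``1'' iterations are pairwise disjoint; they therefore correspond to $k$ distinct (left-child) nodes of the tree, each of which is assigned its own fresh noise draw in the binary mechanism. Hence the $X[a,d]$ are independent.

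It then remains to observe that all these draws share the same variance $\sigma^2$. In the binary mechanism the noise scale is calibrated to the worst-case $\ell_2$-sensitivity over all neighboring streams and all time steps; by \Cref{prop:leaf_sensitivity} that worst case is $\max_{1 \le t \le T}(\text{number of 0s in }\bin(t-1))$, which is the tree height $h$ (attained, e.g., at $t=1$), so every node receives $\gauss(0,\sigma^2)$ with $\sigma^2 = h/(2\rho)$, independent of the node and of $t$. Putting the pieces together, $\var[\M(t)] = k\,\sigma^2$, i.e.\ proportional to the number of 1s in $\bin(t)$ with a proportionality constant that does not depend on $t$.

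The only step requiring genuine care is the distinctness/independence claim: one must check that the iterations with $s_i = 1$ never revisit the same tree node, which is pure bookkeeping on the interval updates in Algorithm~\ref{alg:bin_mech} (the intervals visited form a laminar family, and the recorded left-child blocks are disjoint). Everything else --- linearity of variance, unbiasedness, and the uniform noise scale --- is immediate.
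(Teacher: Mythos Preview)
Your proposal is correct and follows essentially the same approach as the paper: both count the number of noisy p-sums added by Algorithm~\ref{alg:bin_mech} as the number of iterations on which Line~8 fires, which is precisely $\norm{\bin(t)}_1$. The paper's argument is a one-liner that leaves the independence and uniform-variance facts implicit, whereas you spell them out (including the explicit value $\sigma^2=h/(2\rho)$), but the underlying idea is identical.
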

The result follows from the fact that the number of terms added together for a given prefix sum $c(t)$ is equal to the number of 1s in $\bin(t)$, since Line~8 is executed for each such 1.
In this view, each node that is used for the prefix sum at time $t$ can be identified as a prefix string of $\bin(t)$ that ends with a 1.

We will return to \cref{prop:leaf_sensitivity} and \cref{prop:prefix_volatile} when constructing our smooth mechanism, but for now we settle for stating that combined they give the exact variance at each time step for the binary mechanism.
Following~\citet{li_matrix_2015}, to make the mechanism private we have to accommodate for the worst sensitivity across all leaves, which yields \cref{thm:exact_variance_bm}.
\begin{theorem}[Exact Variance for Binary Mechanism \cite{chan_private_2011, dwork_differential_2010}]\label{thm:exact_variance_bm}
    For any $\rho > 0$, $T > 1$, the $\rho$-zCDP binary mechanism $\M$ based on \cref{alg:bin_mech} achieves variance
    \begin{equation*}
        \var[\M(t)] = \frac{\ceil{\log(T+1)}}{2\rho}\norm{\bin(t)}_1
    \end{equation*}
    for all $1\leq t \leq T$, where $\norm{\bin(t)}_1$ is equal to the number of 1s in $\bin(t)$.
\end{theorem}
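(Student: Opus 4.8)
The plan is to combine \cref{prop:leaf_sensitivity} and \cref{prop:prefix_volatile} and then calibrate the Gaussian noise to the worst-case sensitivity, exactly as prescribed by the matrix-mechanism framework of \citet{li_matrix_2015}. First I would fix the height of the tree. Since \cref{alg:bin_mech} needs a tree with at least $t+1$ leaves for every $t\leq T$ (so that the root is never used), the smallest admissible height is $h=\ceil{\log(T+1)}$, i.e.\ $2^h\geq T+1$. This $h$ is the number of bits to which $\bin(\cdot)$ is padded, and hence the length of every root-to-leaf path.

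Next I would pin down the noise scale. By \cref{prop:leaf_sensitivity}, changing $x_t$ alters exactly the p-sums stored in the left-children among the ancestors of the leaf holding $x_t$, and there are as many of those as there are $0$s in $\bin(t-1)$; since a path has length $h$, this count is at most $h$, with equality achieved (e.g.\ by $t-1=0$, the all-zeros string, so $t=1$). Therefore the global $\ell_2$-sensitivity of the vector of stored p-sums is $\Delta=\sqrt{h}$, and by the Gaussian Mechanism lemma each stored noisy p-sum $\widehat\Sigma[a,b]$ carries independent noise $X[a,b]\sim\gauss(0,\Delta^2/(2\rho))=\gauss(0,h/(2\rho))$ to achieve $\rho$-zCDP.

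Finally I would compute the output variance at a fixed time $t$. Tracing \cref{alg:bin_mech}, line~8 fires once for each $1$-bit of $s=\bin(t)$, each time adding one of the stored p-sums; moreover these p-sums correspond to distinct nodes (distinct prefixes of $\bin(t)$ ending in a $1$), so their noise terms are independent. Hence $\M(t)=c(t)+\sum X[a,b]$ over $\norm{\bin(t)}_1$ independent $\gauss(0,h/(2\rho))$ terms, giving $\var[\M(t)]=\norm{\bin(t)}_1\cdot h/(2\rho)=\ceil{\log(T+1)}\,\norm{\bin(t)}_1/(2\rho)$, as claimed.

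The only genuinely delicate point is the independence bookkeeping in the last step: one must be sure that the nodes touched by line~8 across the loop are pairwise distinct (they are, being encoded by distinct prefixes of $\bin(t)$) and that the noise variables are drawn independently per node rather than, say, reused — this is where "only left children store noise" and "each node is used at most once per query" must be invoked together. Everything else is a routine combination of the two propositions with the Gaussian Mechanism lemma.
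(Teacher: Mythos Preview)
Your proposal is correct and follows precisely the route the paper takes: the paper explicitly says the theorem is obtained by combining \cref{prop:leaf_sensitivity} and \cref{prop:prefix_volatile} and then, following \citet{li_matrix_2015}, calibrating noise to the worst-case sensitivity across leaves. Your argument simply unpacks those two sentences --- fixing $h=\ceil{\log(T+1)}$, reading off $\Delta=\sqrt{h}$ from the worst case of \cref{prop:leaf_sensitivity}, and then multiplying the per-node variance $h/(2\rho)$ by the $\norm{\bin(t)}_1$ independent terms counted in \cref{prop:prefix_volatile} --- so there is nothing to add.
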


\subsection{A smooth binary mechanism}
Based on the analysis, a naive idea to improve the binary mechanism would be to only consider leaves with \enquote{favorable} time indices.
To make this a bit more precise, we ask the following question: could there be a better mechanism in which we store elements in only a subset of the leaves in the original binary tree, and then use Algorithm~\ref{alg:bin_mech} to compute the prefix sums?
We give an affirmative answer.

Consider a full binary tree of height $h$ where we let the leaves be indexed by $i$ ($1$-indexed).
Given a sequence of elements $x\in\{0,1\}^T$ (we assume a great enough height $h$ to accommodate for all elements) and an integer $0 \leq k \leq h$, we conceptually do the following for each leaf with index $i$:
\begin{itemize} %
    \item If $\bin(i-1)$ has $k$ $0$s, store the next element of the stream in the leaf.
    \item Otherwise store a token $0$ in the leaf.
\end{itemize}
More rigorously stated, we are introducing an injective mapping from time to leaf-indices $m : \{1,\dots,T+1\} \to [2^h+1]$, such that $m(t)$ is the $(t-1)^{st}$ smallest $h$-bit integer with $k$ $0$s in its binary representation.
$x_t$ gets stored in the leaf with index $m(t)$, and to compute $\M(t)$, we would
add p-sums based on $\bin(m(t+1))$.
The resulting algorithm is listed as \Cref{alg:smooth_bin_mech}, where $\widehat{\Sigma}[i, j]$ is the noisy count of leaf $i$ through $j$.
\begin{algorithm}[htb]
    \small %
   \caption{Prefix Sum for Smooth Binary Mechanism}\label{alg:smooth_bin_mech}
   \begin{algorithmic}[1] %
    \STATE {\bfseries Input:} binary tree of height $h$ storing $\widehat{\Sigma}[a, b]$ for $b\leq m(t)$, time $t$ where $m(t+1) \leq 2^h$
    \STATE $output \gets 0$
    \STATE $s \gets \bin(m(t+1))$ \COMMENT{padded to $h$ bits by adding zeros}
    \STATE $a \gets 1$; $b \gets 2^h$
    \FOR{$i=h-1$ {\bfseries to} $0$}
        \STATE $d = \floor{\frac{a+b}{2}}$
        \IF{$s_i = 1$}
            \STATE $output \gets output + \widehat{\Sigma}[a, d]$
            \STATE $a \gets d+1$
        \ELSE
            \STATE $b \gets d$
        \ENDIF
    \ENDFOR
    \STATE {\bfseries return} $output$
\end{algorithmic}
\end{algorithm}

It follows that the $\ell_2$-sensitivity is equal to $\sqrt{k}$, and that any prefix sum will be a sum of $h-k$ nodes.
Importantly, the latter fact removes the dependence on $t$ for the variance.

\paragraph{Choosing a k.} The optimal choice of $k$ depends on the differential privacy paradigm.
Here we only consider $\rho$-zCDP where $\Delta = \sqrt{k}$.
Since each prefix sum is computed as a sum of $h-k$ nodes, the variance for a given prefix sum becomes $\var[\M(t)] = (h-k)\cdot\frac{k}{2\rho}$, which for $k=h/2$ gives a leading constant of $1/4$ compared to the maximum variance of the binary mechanism.
This choice of $k$ is valid if the tree has an even height $h$, and it maximizes the number of available leaves.
Such a tree together with a computation is shown in \cref{fig:our_mech_explained}.
\begin{figure}[!ht]
\centering
\begin{tikzpicture}[>=stealth, level/.style={sibling distance = 4.1cm/#1, level distance = 0.7cm},
                    level 1/.style={sibling distance = 7cm,},
                    level 2/.style={sibling distance = 3cm, level distance=0.9cm},
                    level 3/.style={sibling distance = 1.5cm, level distance=0.9cm},
                    level 4/.style={sibling distance = 0.7cm, level distance=0.8cm},
                    ]
    \draw
    node[treenode, fill=blue, text=white] (x40) {}
    child{node[treenode, fill=red, text=white] (x30) {$\Sigma[1,8]$}
        child{node[treenode] (x20) {$\Sigma[1,4]$}
            child{node[treenode] (x10) {$\Sigma[1,2]$}
                child{node[treenode] (x00) {$0$}}
                child{node[treenode] (x01) {$0$}} %
            }
            child{node[treenode] (x11) {}
                child{node[treenode] (x02) {$0$}}
                child{node[treenode] (x03) {$x_1$}}
            }
        }
        child{node[treenode] (x21) {}
            child{node[treenode] (x12) {$\Sigma[5,6]$}
                child{node[treenode] (x04) {$0$}}
                child{node[treenode] (x05) {$x_2$}}
            }
            child{node[treenode] (x13) {}
                child{node[treenode] (x06) {$x_3$}}
                child{node[treenode] (x07) {$0$}}
            }
        }
    }
    child{node[treenode, fill=blue, text=white] (x31) {}
        child{node[treenode, fill=red, text=white] (x22) {$\Sigma[9,12]$}
            child{node[treenode] (x14) {$\Sigma[9,10]$}
                child{node[treenode] (x08) {$0$}}
                child{node[treenode] (x09) {$x_4$}} %
            }
            child{node[treenode] (x15) {}
                child{node[treenode] (x010) {$x_5$}}
                child{node[treenode] (x011) {$0$}}
            }
        }
        child{node[treenode, fill=blue, text=white] (x23) {}
            child{node[treenode, fill=blue, text=white] (x16) {$\Sigma[13,14]$}
                child{node[treenode, fill=blue, text=white] (x012) {$0$}}
                child{node[treenode] (x013) {$0$}}
            }
            child{node[treenode] (x17) {}
                child{node[treenode] (x014) {$0$}}
                child{node[treenode] (x015) {$0$}}
            }
        }
    };

    \node[below left= 0.5pt of x00, rotate=60] {\tiny \binnum{0000}};
    \node[below left= 0.5pt of x01, rotate=60] {\tiny \binnum{0001}};
    \node[below left= 0.5pt of x02, rotate=60] {\tiny \binnum{0010}};
    \node[below left= 0.5pt of x03, rotate=60] {\tiny $\bin(m(1))=\,$\binnum{0011}};
    \node[below left= 0.5pt of x04, rotate=60] {\tiny \binnum{0100}};
    \node[below left= 0.5pt of x05, rotate=60] {\tiny $\bin(m(2))=\,$\binnum{0101}};
    \node[below left= 0.5pt of x06, rotate=60] {\tiny $\bin(m(3))=\,$\binnum{0110}};
    \node[below left= 0.5pt of x07, rotate=60] {\tiny \binnum{0111}};
    \node[below left= 0.5pt of x08, rotate=60] {\tiny \binnum{1000}};
    \node[below left= 0.5pt of x09, rotate=60] {\tiny $\bin(m(4))=\,$\binnum{1001}};
    \node[below left= 0.5pt of x010, rotate=60] {\tiny $\bin(m(5))=\,$\binnum{1010}};
    \node[below left= 0.5pt of x011, rotate=60] {\tiny \binnum{1011}};
    \node[below left= 0.5pt of x012, rotate=60] {\tiny $\bin(m(6))=$\underline{\binnum{1100}}};
    \node[below left= 0.5pt of x013, rotate=60] {\tiny \binnum{1101}};
    \node[below left= 0.5pt of x014, rotate=60, font=\bfseries] {\tiny \binnum{1110}};
    \node[below left= 0.5pt of x015, rotate=60] {\tiny \binnum{1111}};
    \draw [decorate,decoration={brace,amplitude=7pt}, line width=0.6pt]
        ($(x011.north -| x011.east)+(0,-24mm)$) -- ($(x00.north -| x00.west)+(0,-24mm)$) node(a) [black,draw=none,midway,yshift=-4.2mm]{\tiny $\M(5) = \widehat{\Sigma}[1, 8] + \widehat{\Sigma}[9,12]$};
\end{tikzpicture}
\caption{
    Computation of $\M(5)$ using the smooth binary mechanism where $T=5$.
    The figure illustrates how the prefix sum up to $t$ can be computed from $\bin(m(t+1))$.
    Blue nodes describe the path taken by Algorithm 2, and the sum of red nodes form the desired output $\M(t)$.
    All values shown in nodes are stored noisily, and $\Sigma[i, j]$ is here defined as the sum of leaves $i$ through $j$.
    Observe that the noise in each node is drawn from the same distribution and that each query is formed by adding together $h/2=2$ noisy nodes, implying an identical distribution of the error at each step.
    }\label{fig:our_mech_explained}
\end{figure}

\paragraph{A penalty in height.}
The analysis above assumes that we have a tree of sufficient height.
If we before had a tree of height $\lceil\log(T+1)\rceil$, we now need a tree of height $h \geq \lceil\log(T+1)\rceil$ to have enough leaves with the right ratio of 1s in their index.
To account for this, we let $h$ be the smallest even integer such that $h \geq \lceil\log(T)\rceil + a \log\log T$, where $a$ is a constant.
For our new tree to support as many prefix sums, we need that $\binom{h}{h/2} \geq T + 1$.
This holds for $a > 1/2$ and sufficiently large $T$, but we show it for $a=1$ next.
Using Stirling's approximation in the first step, we can establish that
\begin{equation*}
    \binom{h}{h/2} \geq \frac{2^h}{\sqrt{2h}} \geq \frac{2^{\log T + \log\log T}}{\sqrt{2}\sqrt{\lceil\log T\rceil + \log\log T}} = \frac{\log T}{\sqrt{2}\sqrt{\lceil\log T\rceil + \log\log T}}\cdot T \enspace ,
\end{equation*}
which is at least $T+1$ for $T \geq 13$.

\paragraph{Resulting variance.}
This height penalty makes $\var[\M(t)]$ no longer scale as $\log(T+1)^2$, but $(\log(T)+a\log\log(T))^2$.
Nevertheless, we can state the following:

\begin{lemma}\label{lemma:smooth_variance}
    For any $\rho > 0$, $T \geq 13$, the $\rho$-zCDP smooth binary mechanism $\M$ achieves variance
    \begin{equation*}
        \var[\M(t)] = \frac{1+o(1)}{8\rho}\log(T)^2
    \end{equation*}
    where $1 \leq t \leq T$, and the $o(1)$ term is at most $\frac{2\log\log(T)}{\log(T)}+\big(\frac{\log\log(T)}{\log(T)}\big)^2$.
\end{lemma}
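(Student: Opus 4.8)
The plan is to combine the two structural facts already established for the smooth construction: (i) the maximum $\ell_2$-sensitivity is $\Delta=\sqrt{k}$ with $k=h/2$, and (ii) every prefix sum is a sum of exactly $h-k=h/2$ independent noisy p-sums. By the Gaussian mechanism (calibrated to $\Delta^2/(2\rho)$) each of these $h/2$ noise terms has variance $\tfrac{k}{2\rho}=\tfrac{h}{4\rho}$, and since they are independent the output variance is $\var[\M(t)]=(h-k)\cdot\tfrac{k}{2\rho}=\tfrac{h^2}{8\rho}$, identical for every $t$. So the whole lemma reduces to bounding $h$, the smallest even integer with $h\geq\lceil\log T\rceil+a\log\log T$ and $\binom{h}{h/2}\geq T+1$; the feasibility of this $h$ for $T\geq 13$ (with $a=1$) is exactly the Stirling computation already carried out in the excerpt, so I may cite it.

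The remaining work is purely to turn the bound $h\le \lceil\log T\rceil + a\log\log T + 2$ (the $+2$ absorbing the rounding to an even integer, and one could note the additive constant is swallowed by the stated $o(1)$ if one is slightly more careful, or simply keep $a=1$ and track it) into the claimed asymptotic form. I would write $h\leq \log T + \log\log T + O(1)$, hence
\begin{equation*}
    \var[\M(t)] = \frac{h^2}{8\rho} \leq \frac{(\log T + \log\log T + O(1))^2}{8\rho} = \frac{\log(T)^2}{8\rho}\left(1 + \frac{\log\log T}{\log T} + O\!\left(\frac{1}{\log T}\right)\right)^2,
\end{equation*}
and then expand the square. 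Expanding $(1+u)^2 = 1 + 2u + u^2$ with $u=\log\log(T)/\log(T)$ (plus lower-order pieces) gives precisely the advertised $o(1)$ term $\tfrac{2\log\log T}{\log T}+\big(\tfrac{\log\log T}{\log T}\big)^2$, with all genuinely lower-order contributions (the $O(1/\log T)$ from rounding, cross terms) absorbed into it since $1/\log T = o(\log\log T/\log T)$. A matching lower bound $h\geq \log T$ shows the leading constant $1/(8\rho)$ is exact, so the expression is an equality up to the $o(1)$, as stated.

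There is essentially no obstacle here: the lemma is a bookkeeping consequence of the variance formula $\var[\M(t)]=(h-k)k/(2\rho)$ and the height bound, both already derived. The only point requiring a little care is making sure the rounding of $h$ to the nearest even integer, and the choice $a=1$ versus the claim that $a>1/2$ suffices, are handled consistently with the stated $o(1)$; I would simply fix $a=1$, use $h\leq \log T+\log\log T+3$, and observe that the extra additive constant changes $\var$ by a multiplicative $1+O(1/\log T)=1+o(1)$ factor that is dominated by the displayed $o(1)$ term. The "identically distributed for all $t$" clause needs no argument beyond remarking that the count of summed nodes, $h-k$, and the per-node variance, $k/(2\rho)$, are both independent of $t$, so every $\M(t)-c(t)$ is the same centered Gaussian $\gauss\!\big(0,\tfrac{h^2}{8\rho}\big)$.
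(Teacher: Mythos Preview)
Your proposal is correct and follows essentially the same approach as the paper: the paper's argument is the discussion in the paragraphs immediately preceding the lemma (``Choosing a $k$'', ``A penalty in height'', ``Resulting variance''), which derives $\var[\M(t)]=(h-k)k/(2\rho)=h^2/(8\rho)$ from the sensitivity and node-count facts and then substitutes $h\approx\log T+\log\log T$. If anything you are slightly more careful than the paper in explicitly tracking the additive $O(1)$ from rounding $h$ to an even integer and noting it is dominated by the displayed $o(1)$ term.
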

which is an improvement over the original binary mechanism by a factor of $1/4$ with regard to the leading term.
This improvement is shown empirically in Section~\ref{sec:experiments}.

\paragraph{Constant average time per output.}
When outputting $T$ prefix sums continuously while reading a stream, we only have to store the noise of the nodes, not the p-sums themselves. To make this more explicit, let $S_t$ describe the set of nodes (p-sum indices) that the smooth mechanism adds together to produce the output at time $t$. To produce $\M(t+1)$ given $\M(t)$, we effectively do:
\begin{equation*}\label{eq:prefix_sum_update}
    \M(t+1) = \M(t) + x_{t+1} + \sum_{(i,j) \in S_{t+1} \setminus S_{t}} X[i,j] - \sum_{(i,j) \in S_t \setminus S_{t+1}} X[i,j] \enspace .
\end{equation*}
To quantify the cost, we need to deduce how many nodes are replaced from $t$ to $t+1$, which means reasoning about $S_t$ and $S_{t+1}$. Recalling that each element in $S_t$ can be identified by a prefix string of $\bin(m(t+1))$ terminating with a 1, consider \cref{fig:amortized_cost_argument}.
\begin{figure}[b]
    \centering
    \tikzstyle{sqvec} = [matrix,thick]
    \begin{tikzpicture}[every node/.style={draw}]
    \matrix(m1) [draw=none,
                 nodes={minimum size=7mm, align=center, draw}]
    {
        \node(leftarr1){\strut$\cdots$}; & \node{\strut$0$}; & \node(ones1){\strut$1\, \cdots \, 1$}; & \node{\strut$0 \cdots 0$}; \\
    };
    \matrix(m2) [draw=none,
                 nodes={minimum size=7mm, draw},
                 right = 7cm of m1.east, anchor=east]
    {
    \node(leftarr2){\strut$\cdots$}; & \node(singleone){\strut$1$}; & \node{\strut$0 \cdots 0$}; & \node(ones2){\strut$1\, \cdots\, 1$}; \\
    };
    \draw [decorate,decoration={brace,amplitude=6pt}, line width=1pt]
        (ones1.south -| ones1.east) -- (ones1.south -| ones1.west) node(a) [black,draw=none,midway,yshift=-4mm]{$n$};
    \draw [decorate,decoration={brace,amplitude=6pt}, line width=1pt]
        (ones2.south -| ones2.east) -- (ones2.south -| ones2.west) node(a) [black,draw=none,midway,yshift=-4mm]{$n-1$};
    \node(label1)[draw=none, anchor=east] at ([yshift=0mm, xshift=1mm]m1.west) {$\bin(m(t+1))=$};
    \node(label2)[draw=none, anchor=east] at ([yshift=0mm, xshift=1mm]m2.west) {$\bin(m(t+2))=$};
    \node(label3)[draw=none, anchor=east] at ([yshift=0mm, xshift=4mm]m1.east) {\large ;};
    \end{tikzpicture}
    \caption{
        The least significant bits of two leaf indices in a full binary tree that are neighboring with respect to time when used in the smooth binary mechanism.
        If the first cluster of 1s in $\bin(m(t+1))$, counted from the least significant bit, has $n$ 1s then $n$ nodes in total will be replaced from $t$ to $t+1$.
        }\label{fig:amortized_cost_argument}
\end{figure}
Based on the pattern shown in \cref{fig:amortized_cost_argument}, where the leaf indices only differ in their least significant bits, we get that $|S_{t+1}\setminus S_t| = |S_t\setminus S_{t+1}| = n$, where $n$ is the number of 1s in the least significant block of 1s.
We formalize this observation next to give a bound on the average cost when outputting a sequence of prefix sums.

\begin{lemma}\label{lemma:average_cost}
    Assuming the cost of replacing a node in a prefix sum is $1$, then the cost to release all $\binom{2k}{k}-1$ prefix sums in the tree of height $2k$ using the smooth binary mechanism is at most $2\binom{2k}{k}$.
\end{lemma}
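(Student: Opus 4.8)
The plan is to reduce the statement to a combinatorial identity about balanced binary strings, which is then dispatched by two applications of the hockey-stick identity.

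\emph{Step 1 (reduce to a sum over leaf indices).} By the reasoning accompanying Figure~\ref{fig:amortized_cost_argument}, advancing the output from $\M(t)$ to $\M(t+1)$ only changes the nodes in $S_{t+1}\setminus S_t$ and in $S_t\setminus S_{t+1}$, both of size $n_t$, where $n_t$ is the length of the least-significant block of $1$s in $\bin(m(t+1))$; hence this step has cost $n_t$. Releasing all $\binom{2k}{k}-1$ prefix sums in the height-$2k$ tree uses the leaf indices $m(2),\dots,m(\binom{2k}{k})$, so the index $m(t+1)$ whose lowest $1$-block is charged runs over every $2k$-bit string with exactly $k$ ones \emph{except} the smallest, $0^k1^k$, and the largest, $1^k0^k$. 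Writing $V$ for the set of all such balanced strings ($|V|=\binom{2k}{k}$) and $b(v)$ for the length of the lowest maximal run of $1$s in $v$, the transitions therefore cost $\sum_{v\in V}b(v)-b(0^k1^k)-b(1^k0^k)$, and the only other cost is the one-time assembly of $\M(1)$ from the $h-k=k$ nodes of $S_1$. Since $b(0^k1^k)=b(1^k0^k)=k$ and $h-k=k$, this start-up cost is paid for, and the total cost is at most $\sum_{v\in V}b(v)$.

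\emph{Step 2 (evaluate the sum).} Writing $b(v)=\sum_{j\ge 1}\mathbf{1}[b(v)\ge j]$ and exchanging the order of summation, $\sum_{v\in V}b(v)=\sum_{j\ge 1}|A_j|$ with $A_j=\{v\in V:b(v)\ge j\}$ (and $A_j=\emptyset$ for $j>k$, since $v$ has only $k$ ones). A string lies in $A_j$ precisely when it has the unique form $w\,1^{j}\,0^{m}$ with $m\ge 0$ trailing zeros and $w$ a prefix containing $k-j$ ones, so
\begin{equation*}
    |A_j|=\sum_{m=0}^{k}\binom{2k-j-m}{k-j}=\binom{2k-j+1}{k-j+1},
\end{equation*}
and summing over $j$ and applying hockey-stick once more yields $\sum_{v\in V}b(v)=\sum_{j=1}^{k}\binom{2k-j+1}{k-j+1}=\binom{2k+1}{k+1}-1$. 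Since $\binom{2k+1}{k+1}=\tfrac{2k+1}{k+1}\binom{2k}{k}<2\binom{2k}{k}$, the total cost is strictly below $2\binom{2k}{k}$, proving the claim. (If one prefers to skip the exact evaluation, the ratio $|A_j|/|A_{j-1}|=\tfrac{k-j+2}{2k-j+2}\le\tfrac12$ for all $j\ge 2$ already bounds $\sum_j|A_j|$ by the geometric sum $2|A_1|=2\binom{2k}{k}$.)

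\emph{Main obstacle.} The delicate part is Step 1: carefully pinning down which multiset of leaf indices is consulted while all prefix sums are released, turning the ``$n$ replaced nodes'' picture of Figure~\ref{fig:amortized_cost_argument} into a per-step cost of exactly $n_t$, and checking the boundary bookkeeping so that the start-up cost really is covered by the two terms $b(0^k1^k),b(1^k0^k)$ omitted from the transition sum. After that, the binomial-coefficient manipulations are routine.
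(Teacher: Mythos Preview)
Your proof is correct and follows essentially the same route as the paper: both identify the per-step cost with the length of the least-significant block of $1$s in the relevant balanced index, sum these block lengths over all balanced $2k$-bit strings, and evaluate the resulting binomial sums via hockey-stick to obtain $\binom{2k+1}{k+1}-O(k)\le 2\binom{2k}{k}$.

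The presentational differences are minor but worth noting. The paper enumerates strings by their \emph{exact} block size $n$ and carries the weight $n$ through the sum, whereas you count $|A_j|=|\{v:b(v)\ge j\}|$ directly, which shaves one algebraic step. You also explicitly include the start-up cost $k$ for assembling $\M(1)$ (the paper's accounting charges only transitions), and your boundary bookkeeping---dropping both $0^k1^k$ and $1^k0^k$ from the transition sum and then reabsorbing one of them as the start-up---is a bit more careful. Finally, your parenthetical geometric bound $|A_j|/|A_{j-1}|\le 1/2$ gives a self-contained route to $2\binom{2k}{k}$ that sidesteps the second hockey-stick application entirely; the paper does not mention this shortcut.
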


\begin{proof}
   As argued before, to compute $\M(t+1)$ given $\M(t)$ we need to replace a number of nodes equal to the size $n$ of the least significant block of 1s in $\bin(m(t+1))$.
   We can therefore directly compute the total cost by enumerating all valid indices with different block sizes as
   \begin{align*}
       \text{cost} &= -k + \sum_{n=1}^{k}\sum_{i=k-n}^{2k-n-1}n\binom{i}{k-n} = -k +\sum_{n=1}^{k}n\binom{2k-n}{k-n+1}\\&= -k + \sum_{i=1}^{k}\sum_{j=1}^{i}\binom{k-1+j}{k-1}
                         = -2k + \sum_{i=1}^{k}\binom{k+i}{k} = \binom{2k+1}{k+1} - (2k+1) \leq 2\binom{2k}{k}
   \end{align*}
where the initial $-k$ term comes from excluding the last balanced leaf index in the tree.
\end{proof}
In particular, this implies that the average cost when releasing all prefix sums in a full tree is $\leq 2$.
For $T$ that does not use all leaves of a tree, comparing to the closest $T'$ where $T\leq T'=\binom{2k}{k}-1$ also implies an average cost of $\leq 4$ for arbitrary $T$.
For a single output the cost is $O(\log(T))$.
It is not hard to check that computing $\bin(m(t+1))$ from $\bin(m(t))$ can be done in constant time using standard arithmetic and bitwise operations on machine words.

\paragraph{Logarithmic space.}
The argument for the binary mechanism only needing $O(\log(T))$ space extends to our smooth variant.
We state this next in \Cref{lemma:log_space}, and supply a proof in the appendix.
\begin{lemma}\label{lemma:log_space}
    The smooth binary mechanism computing prefix sums runs in $O(\log(T))$ space.
\end{lemma}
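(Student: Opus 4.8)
The plan is to argue that the smooth binary mechanism, run as a streaming algorithm, never needs to hold more than $O(\log T)$ machine words in memory at any point in time. The height of the tree is $h = O(\log T)$ by construction (the smallest even integer at least $\lceil\log T\rceil + a\log\log T$), so it suffices to show that the working set of stored quantities has size $O(h)$.

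First I would identify precisely what the mechanism must store. It does \emph{not} store the whole tree: following the update formula $\M(t+1) = \M(t) + x_{t+1} + \sum_{(i,j)\in S_{t+1}\setminus S_t}X[i,j] - \sum_{(i,j)\in S_t\setminus S_{t+1}}X[i,j]$, the state between outputs consists of (i) the current running estimate $\M(t)$, (ii) the current time index $t$ together with $\bin(m(t+1))$, which fits in $O(h/w)$ words, and (iii) the noise values $X[i,j]$ for the nodes currently ``active,'' i.e.\ the left-child ancestors on the path to leaf $m(t+1)$ whose noise has been generated but may still be reused at a later step. The key structural claim is that at every time step the only noise values that can still be needed in the future are those associated with nodes lying on the current root-to-leaf path (equivalently, prefixes of $\bin(m(t+1))$); there are at most $h$ such nodes, so $O(h)$ noise words suffice. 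Any node strictly to the left of the current path has been permanently passed and will never be revisited, so its noise can be discarded; any node strictly to the right has not yet been reached, so its noise need not yet be generated. This is exactly the standard ``only $O(\log T)$ partial sums need be kept'' invariant for the binary tree mechanism, adapted to the left-children-only, balanced-index variant.

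The steps in order: (1) state the invariant — after emitting $\M(t)$, the stored noise values are exactly $\{X[i,j] : (i,j) \text{ is a left-child prefix of } \bin(m(t+1))\}$, of which there are at most $h/2 = O(\log T)$ (one per $0$-bit, by \Cref{prop:leaf_sensitivity}-style counting, though the crude bound $\le h$ is all we need); (2) verify the invariant is maintained by the transition from $t$ to $t+1$: by the block-flipping pattern of \Cref{fig:amortized_cost_argument}, going from $\bin(m(t+1))$ to $\bin(m(t+2))$ changes only a low-order block of bits, so we free the $n$ noise values whose node-prefixes are destroyed and generate $n-1$ fresh ones (plus the new single $1$), never exceeding the $O(h)$ budget even transiently; (3) account for the remaining state — $\M(t)$ is one word (or $O(\log T)$ bits of precision), the index $t$ and the bit-string $\bin(m(t+1))$ are $O(\log T)$ bits, and as noted in the excerpt $\bin(m(t+1))$ can be advanced in $O(1)$ time and space with word operations; (4) sum: total space is $O(\log T)$ words.

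The main obstacle is not conceptual but bookkeeping: making precise the claim that a node to the left of the current path is never used again. This follows because $m$ is monotone in $t$ and Algorithm~\ref{alg:bin_mech} (and its balanced-index analogue) only adds a left-child's noise when the path turns right at that node's parent; once the path has moved to a leaf with a larger index, all such ``already turned right'' nodes have indices strictly below the new path and cannot be prefixes of any future $\bin(m(t'+1))$ with $t' \ge t$. I would phrase this as a short monotonicity lemma rather than a case analysis. A secondary subtlety worth one sentence is the word-size / bit-precision convention: we count space in machine words under the standard assumption that a word holds $\Omega(\log T)$ bits and that a sample from the (discrete) Gaussian used for each $X[i,j]$ occupies $O(1)$ words; under that convention the $O(\log T)$ bound is immediate, and I would flag it so the statement is unambiguous.
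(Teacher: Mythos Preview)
Your proposal is correct and follows essentially the same approach as the paper's proof: both hinge on the monotonicity observation that once a node's p-sum (noise) leaves the active set it can never re-enter, so only the $O(h)=O(\log T)$ nodes associated with the current leaf's root-to-leaf path need be retained. Your write-up is more explicit about the remaining state (running estimate, index, word-model conventions) than the paper's terse version; one small slip is the ``one per $0$-bit'' remark---the active p-sums actually correspond to the $1$-bits of $\bin(m(t+1))$ (cf.\ \Cref{prop:prefix_volatile}), not the $0$-bits---but since you immediately fall back to the crude bound $\le h$ this does not affect the argument.
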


\paragraph{Finishing the proof for \cref{thm:smooth_binary_mech}.}
The last, and more subtle, property of our mechanism is that the error at each time step not only has constant variance, but it is identically distributed.
By fixing the number of 1s in each leaf index to $k$, the output error at each step becomes the $k$-fold convolution of the noise distribution used in each node, independently of what that distribution is.
Our mechanism achieves this by design, but note that this can be achieved for any other mechanism (e.g. the regular binary mechanism) by adding fresh noise in excess of what gives privacy (at the expense of utility).
Combining Lemmas~\ref{lemma:smooth_variance},~\ref{lemma:average_cost}~and~\ref{lemma:log_space} together with this last property, we arrive at \Cref{thm:smooth_binary_mech}.

\section{Comparison of mechanisms}\label{sec:experiments}

In this section we review how the smooth binary mechanism compares to the original binary mechanism, and the factorization mechanism of \citet{henzinger_almost_2023}.
A Python implementation of our smooth binary mechanism (and the classic binary mechanism) can be found on \url{https://github.com/jodander/smooth-binary-mechanism}.
We do not compare to \citet{denissov_improved_2022} since their method is similar to that of \citet{henzinger_almost_2023} in terms of error, and less efficient in terms of time and space usage.
We also do not compare to the online version of \citet{honaker2015} as it is less efficient in time, and its improvement in variance over the binary mechanism is at most a factor of $2$, and can therefore be estimated from the figures.

\paragraph{Variance comparison.}

To demonstrate how the variance behaves over time for our smooth binary mechanism, see Figure~\ref{fig:variance_comparison}.
\begin{figure}[!ht]
\centering
\subfigure[Running variance for $T=250$.]{\includegraphics[width=0.49\textwidth]{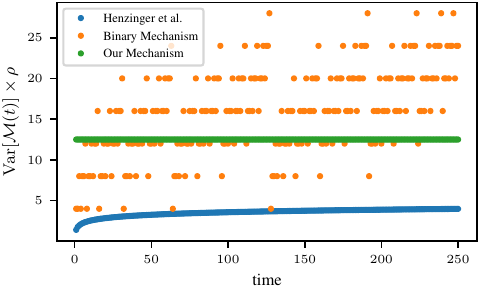}\label{fig:running_variance}}
\subfigure[Maximum variance vs.\ upper bound on time.]{\includegraphics[width=0.49\textwidth]{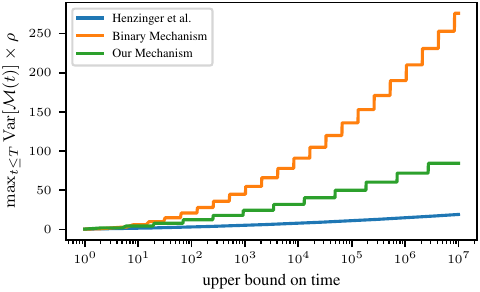}\label{fig:max_variance}}
\caption{
    Comparison of variance between the mechanism in \citet{henzinger_almost_2023}, the standard binary mechanism and our mechanism.
    \Cref{fig:running_variance} shows $\var[\M(t)]$ for $1\leq t\leq T$ for $T=250$, whereas \Cref{fig:max_variance} shows the maximum variance that each mechanism would attain for a given upper bound on time.
    At the last time step in \Cref{fig:max_variance}, our mechanism reduces the variance by a factor of $3.27$ versus the binary mechanism.
    }\label{fig:variance_comparison}
\end{figure}
Given a fix $T$, the tree-based mechanisms compute the required tree height to support all elements, and the factorization mechanism a sufficiently large matrix.
The volatility of the error in the regular binary mechanism is contrasted by the stable noise distribution of our smooth binary mechanism, as demonstrated in \Cref{fig:running_variance}.
In terms of achieving the lowest variance, the factorization mechanism in \citet{henzinger_almost_2023} is superior, as expected.
This result is replicated in \Cref{fig:max_variance} where our mechanism offers a substantial improvement in terms of maximum variance over the original binary mechanism, but does not improve on \citet{henzinger_almost_2023}.

\paragraph{Computational efficiency comparison.}

While our mechanism does not achieve as low noise as the mechanism of \citet{henzinger_almost_2023}, it scales well in time and space, and with respect to the dimensionality of the stream elements.
This is empirically demonstrated in \Cref{fig:performance_comparison}.
\begin{figure}[!ht]
\centering
\subfigure[Average computational time per output.]{\includegraphics[width=0.49\textwidth]{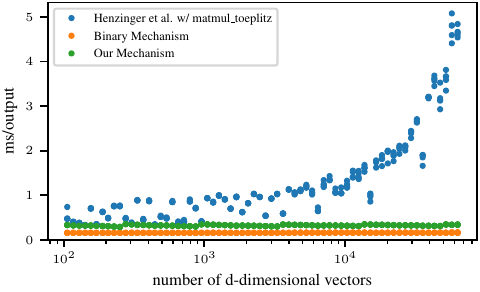}\label{fig:time_performance}}
\subfigure[Maximum space needed vs upper bounds on time.]{\includegraphics[width=0.49\textwidth]{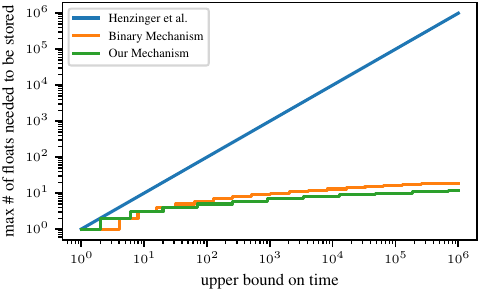}\label{fig:space_performance}}
\caption{
    Comparison of computational efficiency between the mechanism in \citet{henzinger_almost_2023}, the binary mechanism and our mechanism.
    \Cref{fig:time_performance} shows the computation time spent per $d$-dimensional input.
    The simulation was run $5$ times for each method, meaning each method has $5$ data points in the plot per time step.
    The computation was performed for elements of dimension $d=10^4$, was run on a Macbook Pro 2021 with Apple M1 Pro chip and 16 GB memory using Python 3.9.6, scipy version 1.9.2, and numpy version 1.23.3.
    \Cref{fig:space_performance} shows the maximum number of floats that has to be stored in memory when outputting all prefix sums up to a given time, assuming binary input.
    }\label{fig:performance_comparison}
\end{figure}
Since the matrix used in \citet{henzinger_almost_2023} is a Toeplitz matrix, the scipy method \enquote{matmul\_toeplitz} (based on FFT and running in time $O(d T\log T)$ to produce a $d$-dimensional output) was used to speed up the matrix multiplication generating the noise in \Cref{fig:time_performance}.

The discrepancy in the computation time scaling can largely be attributed to space: the needed space for these tree-based methods scales logarithmically with $T$, and linearly with $T$ for matrix-multiplication based methods.
This is demonstrated in \Cref{fig:space_performance}.
As to the difference in computation time between the tree-based methods, the smooth binary mechanism generates twice as much fresh noise per time step on average, which likely is the dominating time sink in this setup.

\section{Conclusion and discussion}

We presented an improved \enquote{smooth} binary mechanism that retains the low time complexity and space usage while improving the additive error and achieving stable noise at each time step.
Our mechanism was derived by performing a careful analysis of the original binary mechanism, and specifically the influence of the binary representation of leaf indices in the induced binary tree.
Our empirical results demonstrate the stability of the noise and its improved variance compared to the binary mechanism.
The factorization mechanism of \citet{henzinger_almost_2023} offers better variance, but is difficult to scale to a large number of time steps, especially if we need high-dimensional noise.

We note that the smooth binary mechanism can be extended to $\varepsilon$-DP. The optimal fraction of 1s in the leaves of the binary tree would no longer be $1/2$.
An interesting problem is to find mechanisms that have lower variance and attractive computational properties.
It is possible that the dependence on $T$ can be improved by leaving the factorization mechanism framework, but in absence of such a result the best we can hope is to match the variance obtained by~\citet{henzinger_almost_2023}.

\newpage

\section*{Acknowledgments and disclosure of funding}
We would like to thank Jalaj Upadhyay for useful discussions about our results and their relation to previous work.
We would also like to thank the reviewers for their comments and suggestions which have contributed to a clearer paper.
The authors are affiliated with Basic Algorithms Research Copenhagen (BARC), supported by the VILLUM Foundation grant 16582, and are also supported by Providentia, a Data Science Distinguished Investigator grant from Novo Nordisk Fonden.

\bibliography{paper}
\bibliographystyle{icml2023} %

\newpage
\appendix

\section{Proof of \Cref{lemma:log_space}}\label{apendix:proof_average_cost}

The proof of \Cref{lemma:log_space} was omitted from the main text, it is given next.
\begin{proof}
    The critical observation to make is that once a given p-sum is removed from a prefix sum, then it will never re-appear. Let its associated prefix string at time $t$ be $s$ of length $l$. If we look at the same $l$ bit positions in $\bin(m(t))$ as $t$ increases and interpret it as a number, then it is monotonously increasing with $t$. This implies that once a given prefix-string $s$ in $\bin(m(t))$ disappears at $t' > t$ then it will not be encountered again. We can therefore free up the memory used for storing any p-sum the moment it is no longer used. Because of this, it suffices to only store the p-sums in the active prefix sum at any time, of which there are at most $O(\log(T))$, proving the statement.
\end{proof}

\section{Factorization mechanisms on multidimensional input}\label{appendix:factorization_mech_high_dim}
Assuming a $d$-dimensional input stream with elements $x_i \in \mathbb{R}^d$, let two streams $x, x'$ be neigboring if they differ at exactly one time step $t$ where $\norm{x_t-x_t'}_2\leq 1$.
Given a one-dimensional $\rho$-zCDP factorization mechanism for continual counting with factorization $A = L R$, we want to argue that running it along each dimension separately gives a $\rho$-zCDP factorization mechanism for releasing prefix sums on $x$.

To see this, consider the new, flattened vector input $\tilde{x}\in \mathbb{R}^{d\cdot T}$ where $\tilde{x} = [(x_1)_1, (x_1)_2, \dots, (x_1)_d, (x_2)_1, \dots, (x_T)_d]$, i.e., $(x_t)_j = \tilde{x}_{(t-1)\cdot d + j}$.
Analogously for these new inputs, we can define a counting matrix $\tilde{A} = A\oplus I_{d\times d} \in\mathbb{R}^{d\cdot T \times d\cdot T}$ that sums each dimension separately, where $\oplus$ is the Kronecker product and $I_{d\times d}$ is the $d$-dimensional identity matrix.
We get a corresponding factorization $\tilde{A} = \tilde{L}\tilde{R}$ where $\tilde{L} = L\oplus I_{d\times d}$ and $\tilde{R} = R\oplus I_{d\times d}$.

It follows immediately that, $\tilde{L}, \tilde{R}$ gives a factorization mechanism for privately releasing $\tilde{A}\tilde{x}$, and from it we can extract all private prefix sums on the original stream $x$.
Letting $\tilde{x}'$ be the vector representation of the stream $x'\sim x$, we will reason about the $\ell_2$-sensitivity $\Delta$ next.
First observe that for neighbouring inputs $x$ and $x'$ that differ at time $t$, we have:
\begin{equation*}
    \norm{\tilde{R}(\tilde{x} - \tilde{x}')}_2 = \norm{R e_t}_2 \cdot \norm{\tilde{x} - \tilde{x}'}_2
\end{equation*}
and therefore
\begin{equation*}
    \Delta = \sup_{\tilde{x}\sim \tilde{x}'}\norm{\tilde{R}(\tilde{x} - \tilde{x}')}_2
    = \max_i \norm{R e_i}_2 \cdot \sup_{\tilde{x}\sim \tilde{x}'}\norm{\tilde{x} - \tilde{x}'}_2
    = \max_i \norm{R e_i}_2\,,
\end{equation*}
which equates the $\ell_2$-sensitivity of the one-dimensional case.
Recapitulating, any one-dimensional $\rho$-zCDP factorization mechanism for continual counting is also a $\rho$-zCDP counting mechanism for inputs of higher dimension, as long as the neigboring relation is appropriately extended.

\end{document}